\documentclass[twoside,11pt]{article}

% Recommended, but optional, packages for figures and better typesetting:
\usepackage{microtype}
\usepackage{graphicx}
\usepackage{subcaption}

\usepackage{booktabs} % for professional tables

\usepackage[preprint]{jmlr2e}

% For theorems and such
\usepackage{amsmath}
\usepackage{amssymb}
\usepackage{mathtools}
%\usepackage{amsthm}

% if you use cleveref..
\usepackage[capitalize,noabbrev]{cleveref}

\begin{document}

\title{Loss Functions for Classification using Structured Entropy}

\author{\name Brian Lucena \email brian@numeristical.com \\
       \addr Numeristical\\
       Alameda, CA 94501, USA
       }
\ShortHeadings{Structured Entropy Loss Functions}{Brian Lucena}      
\maketitle

\begin{abstract}
Cross-entropy loss is the standard metric used to train classification models in deep learning and gradient boosting.  It is well-known that this loss function fails to account for similarities between the different values of the target.  We propose a generalization of entropy called {\em structured entropy} which uses a random partition to incorporate the structure of the target variable in a manner which retains many theoretical properties of standard entropy.  We show that a structured cross-entropy loss yields better results on several classification problems where the target variable has an a priori known structure.  The approach is simple, flexible, easily computable, and does not rely on a hierarchically defined notion of structure.\end{abstract}

\section{Introduction}
The cross-entropy loss function (e.g. \cite{lecun2015deep}) is the standard optimization metric for training neural networks and gradient boosting models.  However, cross-entropy does not consider any notion of similarity or structure of the possible values of the variable. Intuitively, when the correct answer is ``maple",  a model which predicts ``oak" is less wrong than one that predicts ``trout".  This weakness is well-known (e.g. \cite{wu2019hierarchical}, \cite{Bertinetto_2020_CVPR}) and there have been numerous attempts to address it.

From one perspective, the inability of cross-entropy to make distinctions between more and less similar target values stems from the definition of entropy of a random variable (\cite{Shannon48}).  Since entropy treats all possible values of the random variable as equally different, that property flows down to KL-divergence and cross-entropy.  Therefore, it would be desirable to have a version of entropy that recognizes the structure of the values of the target random variable.  Ideally, we would wish to have a variant that also satisfies the following properties:
\begin{enumerate}
\item{It retains the fundamental properties of entropy and its derived quantities such as conditional entropy, cross-entropy and KL-divergence.}
\item{It is flexible and simple to define {\em any} kind structure based on real-world knowledge (i.e. it is not restricted to hierarchical structure).}
\item{It is efficient to repeatedly compute the derived cross-entropy (e.g. as a loss function in neural networks or gradient boosting).}
\end{enumerate}

\subsection{Contributions}
This paper makes the following contributions.  We define a notion of {\em structured entropy} wherein the values of the random variable are equipped with a notion of similarity.  Specifically, we define structure by means of a {\em random partition} which effectively gives weights to different partitions of the state space.  These partitions represent meaningful ``coarsenings", and thus the structured entropy can be seen as a weighted average of standard entropies taken over different degrees of coarse/fine structure.  By using partitions, rather than notions specific to hierarchical structure, our space of loss functions has much higher dimensionality and is therefore broader and more generally applicable.  We show that structured entropy satisfies the three criteria mentioned in the introduction: theoretical soundness, flexibility, and efficient computability.  We further propose an approach using a {\em variable} loss function in iterative methods, which requires only a graphical representation of the structure of the outcome variable. We demonstrate improved performance on a variety of prediction problems.

The rest of the paper is organized as follows.  Section~\ref{sec:related-work} discusses related work and positions our contributions accordingly.  Section~\ref{sec:defentr} defines structured entropy and shows that it is a natural extension of entropy, sharing many theoretical properties and derived variants.  Section~\ref{sec:howtostructure} discusses how to define random partitions in practice, illustrating several examples.  Section~\ref{sec:experiments} shows the improvement yielded by this approach on several prediction problems involving various kinds of structure on the target variable.  

\section{Related Work}
\label{sec:related-work}
There are several bodies of work that relate to the ideas presented here. A large body of work is focused generally on exploiting hierarchy in classification problems.  \cite{Chen2018UnderstandingTI} demonstrates that training on fine-grained labels (using the standard cross-entropy loss function) can improve classification performance on the coarse-grained labels, specifically for CNNs on image classification problems.  The message of our paper is consistent with this, but in fact more general, indicating that useful information exists at {\em all} levels of the hierarchy, and can be exploited by structured cross-entropy.   A related work (\cite{MoHierarchical}) finds a similar benefit to fine-grained labels in the context of text classification and advocates for ``active over-labeling" as a route to improved performance.  \cite{Bertinetto_2020_CVPR} define approaches using {\em hierarchical cross-entropy}  and {\em soft-labels} to encode hierarchical information and demonstrate an ability to trade-off ``more mistakes" with ``better mistakes" in a manner that subsumes the prior art.  \cite{brust2019integrating} shows accuracy improvements by exploiting DAG-based hierarchicies.  A hierarchical alternative to cross-entropy is defined in \cite{wu2019hierarchical}  but the paper casts doubt upon its effectiveness as a loss function for training.  \cite{hoyoux2016can} proposes variations to k-nearest neighbors and support vector machines that attempt to utilize hierarchical structure but finds little to no improvement over the ``flat" alternatives.  In \cite{zhao2017hierarchical}, the hierarchy is used to find different features for classifying different nodes in the hierarchical tree. In \cite{kosmopoulos2015probabilistic} a cascading approach is explored, training different classifiers at each node of the tree, but combining them in a non-greedy manner. \cite{zhao2017hierarchical} proposes an alternative to cross-entropy specifically to combat noisy labels, though it is not hierarchical in nature. \cite{wang2017local} uses the hierarchy to make coarser predictions when confidence is lower about the more fine-grained label. In \cite{ristin2015categories}, a variant of random forests called nearest class mean (NCM) forests are further modified to use both coarse and fine grained labels.  Earlier work on hierarchical classification includes \cite{NIPS2011_d5cfead9} and a survey can be found in \cite{silla2011survey}.   Our work differs from all of the above in that we define structure via partitions.  As such, our approach is more generally applicable to different kinds of structure, and does not rely on hierarchical notions.

Another body of work attempts to generalize entropy to reflect structure, but without necessarily considering its potential use as a replacement for cross-entropy loss.  A similar attempt to generalize entropy is found in \cite{pmlr-v108-posada20a}, which introduced to the machine learning community a notion of entropy first defined by \cite{leinster2012measuring}.  This definition takes a similarity matrix between the values of the variable and uses it to calculate ``geometry-aware" versions of entropy and related concepts.  They apply this notion to computing image barycenters and training generative models, among other applications, but do not use it as a loss function, perhaps because its computational complexity remains high.  An earlier notion, {\em weighted entropy} (\cite{BelisGuiasu1}, \cite{Guiasu1977InformationTW}, for a modern treatment see \cite{pocock2012feature}) assigns a weight to each distinct value of the variable, but does not convey similarity between values.  Notably, the derived weighted mutual information can be negative (\cite{pocock2012feature}, \cite{e19110631}).  Our notion of structured entropy differs from both of these.

The literature on ``optimal transport" (\cite{villani2008optimal}) or ``earth-movers" distance  (\cite{peyre2019computational}) is related as it provides an alternative to KL-divergence.  Work in this direction includes \cite{pmlr-v84-genevay18a}, \cite{pmlr-v32-cuturi14}, and \cite{WassLossFrogner}.  However, these metrics are quite computationally expensive and therefore not suitable for use as a loss function in an iterative algorithm, even when approximated as in \cite{cuturi-lightspeed}.
  
The approach of ``contrastive learning"  (e.g. \cite{khosla2020supervised}, \cite{NIPS2016-6b180037}, \cite{tian2019contrastive}, \cite{oord2018representation}) attempts to learn structural representations of the data, but by processing the data itself, rather than exploiting known (a priori) structure in the labels.

Finally, previous work by this author (\cite{pmlr-v108-lucena20a}) defined the notion of a {\em terrain} from which meaningful partitions could be examined to exploit structure in categorical variables.  However, that notion applied only to predictor variables, not target variables, and its use relied on tree-based methods.

\section{Structured Entropy}
\label{sec:defentr}

Here we define structured entropy and prove many of its properties.  The ``nested" nature of the definitions (involving sets of sets of sets) can be confusing,  so we will present a simple example and use it to illustrate as we proceed through the material. For a review of standard entropy, we refer the reader to \cite{cover1999elements}.

We will use the following conventions:  lowercase letters ($a$) represent basic elements, capital letters ($A$) represent sets of elements, calligraphic letters ($\mathcal{A}$) represent sets of sets of elements (e.g. a partition), and block letters ($\mathbb{A}$) represent sets of partitions (i.e. sets of sets of sets of elements).   We will use the notation $S_Y$ to represent the set of possible values of a random variable $Y$.  For simplicity, we will assume that all variables have a finite state space.

As a running example, let $Y$ represent the outcome of hospital patients 30 days post-discharge and let $S_Y=\{1,2,3\}$ where 1 represents that the patient died (without ever returning to the hospital),  2 represents readmission to the hospital within 30 days, and 3 represents that neither of the other events occurred.  We intend to predict $Y$ using a predictor $X$ which has information about the vital signs, lab values, and medical history of the patient.  We expect the patients who die and the patients who readmit to look more similar to each other (in terms of their $X$ values) than either group will be to the uneventful patients.  We want to represent this prior belief in our calculations of entropy and its derived quantities.  Let $p_1, p_2, p_3$ represent the probabilities that $Y$=1,2,3 respectively (where $p_3 = 1-p_1-p_2$).

\subsection{Definitions}

\begin{definition}
A partition $\mathcal{P}$ of a set $S$ is a set of non-empty subsets of $S$ such that for every element $y \in S$, there exists exactly one element of $\mathcal{P}$ which contains $y$.  The elements of $\mathcal{P}$ are referred to as {\em blocks} of the partition.
\end{definition}

\begin{definition}
The {\em singleton partition} of a set $S$ the partition $\mathcal{S}$ given by $\{\{s\}: s \in S\}$.
\end{definition}

With a slight abuse of notation, we will define the coarsening of a variable $Y$ with respect to a partition $\mathcal{S}$ as follows:

\begin{definition}
Given a random variable $Y$ and a partition $\mathcal{S}$ on $S_Y$, we define $\mathcal{S}(Y)$ to be a random variable such that $\mathcal{S}(Y) = B$ if and only iff $Y=y$ and $y \in B$ (where $B$ is a block in $\mathcal{S}$) .
\end{definition}

In other words, $\mathcal{S}(Y)$ is a function of $Y$ which maps the value of $Y$ to the subset in the partition $\mathcal{S}$ which contains it.  So, $\mathcal{S}(Y)$ is a random variable, but merely a function of $Y$.

\begin{remark}
Since $\mathcal{S}(Y)$ is a function of $Y$, we can conclude that $H(\mathcal{S}(Y)) \leq H(Y)$ (where $H(Y)$ denotes the Shannon entropy of a random variable $Y$).
\end{remark}

\begin{definition} 
A {\em structure} $\mathbb{S}$ of a set $S$ is a set of partitions of $S$.  We also refer to a structure of a random variable $Y$ as a set of partitions of $S_Y$.
\end{definition}

\begin{definition}
A {\em random partition} of a set $S$ is a random variable $Z$ which takes values on a structure $\mathbb{S}$ of $Y$.
\end{definition}

\begin{definition}
A {\em structured random variable} is a random variable $Y$ paired with a random partition $Z$ of the set $S_Y$. We denote the structured random variable by $Y_Z$.
\end{definition}

Returning to our example, let $\mathcal{S}_1$ be the singleton partition, i.e. $\mathcal{S}_1 = \{\{1\},\{2\},\{3\}\}$, let $\mathcal{S}_2$ be the partition $ \{\{1,2\},\{3\}\}$, and let $\mathbb{S}=\{\mathcal{S}_1,\mathcal{S}_2\}$ be a structure of $Y$.  This choice of structure represents our prior belief that 1 and 2 are more similar to one another.  When we calculate entropy and related quantities, we will essentially ``average" over two points of view: the first where 1, 2, and 3 are each distinct and separate, and a second where 1 and 2 are equivalent (representing ``sick" patients) and 3 is distinct.

\subsubsection{Structured Entropy}
\begin{definition}
Let $Y_Z$ be a structured random variable, where $Z$ takes values on a structure $\mathbb{S}$ of $Y$. The {\em structured entropy} of  $Y_Z$ is given by:
\begin{equation*}
H(Y_Z)=\sum_{\mathcal{S}_i \in \mathbb{S}} P(Z=\mathcal{S}_i) \sum_{S \in \mathcal{S}_i} -P(Y \in S) \log(P(Y\in S))
\end{equation*}
Put another way,
\begin{equation}
H(Y_Z)=\sum_{\mathcal{S}_i \in \mathbb{S}} P(Z=\mathcal{S}_i) H(\mathcal{S}_i(Y))
\end{equation}
\end{definition}

\begin{definition}
Given a structured random variable $Y_Z$, let the {\em random block} $Z(Y)$ be defined by $Z(Y) = \mathcal{S}_i(Y)$ when $Z=\mathcal{S}_i$.  In other words, $Z(Y)$ is a random variable (function of $Y$ and $Z$) taking values in $\mathcal{T}=\bigcup_{\mathcal{S}_i \in \mathbb{S}} \mathcal{S}_i$, which represents the unique subset of $S_Y$ that contains the sampled value of $Y$ and is in the sampled partition of $Z$.
\end{definition}

\begin{remark}
The structured entropy is equivalent to the conditional (standard) entropy of $Z(Y)$ given $Z$:
\begin{equation}
H(Y_Z)=H(Z(Y)|Z)
\end{equation}
\end{remark}

\begin{definition}
The {\em trivial random partition} of a set $S$ is given by $\mathbb{S}=\{\mathcal{S}\}$  where $\mathcal{S} = \{\{y\}: y \in S(Y)\}$ and $P(Z=\mathcal{S})=1$.  In other words, the structure contains only the partition containing all singletons, and that partition has probability one.
\end{definition}

\begin{remark}
Let $Z$ be the trivial random partition on $S_Y$.  Then $H(Y_Z) = H(Y)$.
\end{remark}

Continuing the example, let $Z$ be a random partition taking values on $\mathbb{S}$, where $q_1 = P(Z=\mathcal{S}_1)$ and $q_2 = 1-q_1 = P(Z=\mathcal{S}_2)$.  Here, $Z$ determines how much weight to give to the singleton partition $\mathcal{S}_1$ relative to $\mathcal{S}_2$.  When $q_1$ is near one, then $H(Y_Z)$ will be close to $H(Y)$, whereas when $q_1$ is near zero, $H(Y_Z)$ will be close to the standard entropy that we would have if we ``coarsened" our outcome and treated death and readmission as the same.  By choosing $0<q_1<1$ we strike a balance between these extremes.

Now consider the random block $Z(Y)$ which takes values in the set $\mathcal{T} = \bigcup_{\mathcal{S}_i \in \mathbb{S}} \mathcal{S}_i$.  In this example, $\mathcal{T} = \{\{1\},\{2\},\{3\},\{1,2\}\}$. These are all the different blocks used by the partitions in $\mathbb{S}$.   Recall that $Z(Y)$ is the element of $Z$ that contains the value $Y$.  In this case:
\begin{equation*}
Z(Y) = 
    \begin{cases}
      \{1\}, & \text{with probability }\ q_1 p_1\\
      \{2\}, & \text{with probability }\ q_1 p_2\\
      \{3\}, & \text{with probability }\ q_1 p_3 + q_2 p_3 = p_3\\
      \{1,2\}, & \text{with probability }\ q_2 (p_1+p_2)
    \end{cases}
\end{equation*}

We can think about the structured entropy $H(Y_Z)$ as a weighted combination of two standard entropy values.  The first value is $H(Y)$, which corresponds to the scenario where $Z=\mathcal{S}_1$, the singleton partition.  The second value is  $h(p_1+p_2)$ where $h$ is the binary entropy function.  This corresponds to the scenario where $Z = \mathcal{S}_2$.  In that scenario, we conglomerate outcomes 1 and 2 of $Y$, and so the resulting entropy is that of a binary variable.

Alternatively, we can consider the random block $Z(Y)$ and easily verify that $H(Y_Z) = H(Z(Y)|Z)$.  In other words, structured entropy is just the conditional entropy of the random block given the partition.  Note that, in general, if $Y$ has $n$ possible values , then $Z(Y)$ can take on $O(2^n)$ different values - one for each subset of the $n$ possible values.  This high dimensionality is key to the flexibility of structured entropy.

One might ask what distribution $(p_1, p_2, p_3)$ would maximize the structured entropy of $Y_Z$ (given $q_1$).  Looking at the two limiting cases, we see that when $q_1 = 1$ we only care about the singleton partition, and the maximum entropy should be achieved at $(1/3, 1/3, /1/3)$.  Meanwhile, as $q_1 \rightarrow 0$ we care only about the coarser partition, and a symmetry argument suggests that the maximum occurs at $(.25, .25, .5)$.  Writing out the structured entropy and using basic calculus, it is easy to show that the maximum entropy occurs at:
\begin{equation*}
p_1 = p_2 = \frac{1}{2(1+2^{-q_1})}
\end{equation*}

\subsection{Related Concepts and Properties}
Here we extend our definition of structured entropy to many related quantities.  The proofs of the theorems are straightforward and deferred to the Appendix. Unless otherwise specified, all definitions and theorems assume the {\em default setting} presented below:

Let $Y$ be a structured random variable with a random partition $Z$ defined on a structure $\mathbb{S}$ over $S(Y)$.  Let $X$ be a structured random variable with a random partition $W$ defined on a structure $\mathbb{R}$ over $S(X)$.  
Let $q_i = P(Z=\mathcal{S}_i)$ and let $r_i = P(W=\mathcal{R}_i)$.

\subsubsection{Conditional Structured Entropy}
\begin{definition}
\label{cond-str-ent}
The {\em conditional entropy} $H(Y_Z|X_W)$ is given by:
\begin{equation}
H(Y_Z|X_W) = \sum_{\mathcal{S}_i \in \mathbb{S}} \sum_{\mathcal{R}_j\in \mathbb{R}} q_i r_j H(\mathcal{S}_i(Y)|\mathcal{R}_j(X))
\end{equation}
\end{definition}

Since the structured entropy is just a convex combination of Shannon entropies, it is easy to show that conditioning can only reduce structured entropy.

\begin{theorem}
$H(Y_Z|X_W) \leq H(Y_Z)$
\end{theorem}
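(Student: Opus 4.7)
The plan is to leverage the fact that structured entropy, by its definition, is simply a convex combination of standard Shannon entropies, together with the classical fact that conditioning never increases Shannon entropy. Since the conditional structured entropy (Definition \ref{cond-str-ent}) is likewise a convex combination of standard conditional Shannon entropies (over the same convex weights $q_i r_j$), the inequality should transfer termwise.

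Concretely, I would proceed as follows. First, recall the classical fact from information theory that for any two random variables $A$ and $B$ with finite support, $H(A \mid B) \leq H(A)$. Applying this termwise to the random variables $\mathcal{S}_i(Y)$ and $\mathcal{R}_j(X)$ (which are ordinary random variables, being deterministic functions of $Y$ and $X$), we get
\begin{equation*}
H(\mathcal{S}_i(Y) \mid \mathcal{R}_j(X)) \leq H(\mathcal{S}_i(Y))
\end{equation*}
for every $i,j$.

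Next, I would multiply both sides by the nonnegative weights $q_i r_j$ and sum over $i$ and $j$, invoking Definition \ref{cond-str-ent} on the left-hand side. For the right-hand side, I would pull out the $j$-sum and use $\sum_j r_j = 1$ (since $W$ is a random partition, its distribution sums to one), reducing the expression to $\sum_i q_i H(\mathcal{S}_i(Y))$, which is exactly $H(Y_Z)$ by the structured entropy definition.

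There is no real obstacle here: the argument is essentially a one-line reduction to the classical conditioning-reduces-entropy inequality, and the only thing that needs to be checked carefully is that $\mathcal{S}_i(Y)$ and $\mathcal{R}_j(X)$ qualify as honest random variables so that the standard inequality applies, which is immediate since each is a function of a finite-state random variable. If a sharper statement were desired (for example, characterizing equality), one would note that equality holds precisely when $\mathcal{S}_i(Y)$ is independent of $\mathcal{R}_j(X)$ for every $(i,j)$ with $q_i r_j > 0$, but this is not needed for the stated theorem.
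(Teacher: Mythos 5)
Your proposal is correct and follows essentially the same approach as the paper: apply the classical inequality $H(\mathcal{S}_i(Y)\mid\mathcal{R}_j(X)) \leq H(\mathcal{S}_i(Y))$ termwise, weight by $q_i r_j$, and sum out the $r_j$ using $\sum_j r_j = 1$ to recover $H(Y_Z)$. No gaps; the remark on the equality condition is a harmless bonus not present in the paper.
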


\subsubsection{Relative Structured Entropy}
\begin{definition}
Let $Y$ and $Y^{\prime}$ be two random variables over the same space $S_Y$ and let $Z$ be a random partition on $S_Y$.  Let $q_i = P(Z=\mathcal{S}_i)$.  The {\em structured relative entropy} $D(Y_Z||Y^{\prime}_Z)$ is given by:
\begin{equation*}
D_{\mathbb{S}}(Y_Z||Y^{\prime}_Z) = \sum_{\mathcal{S}_i \in \mathbb{S}}q_i \sum_{S \in \mathcal{S}_i}  P(Y \in S)\log\left(\frac{P(Y \in S)}{P(Y^{\prime} \in S)}\right)
\end{equation*}
or equivalently:
\begin{equation}
D_{\mathbb{S}}(Y_Z||Y^{\prime}_Z) = \sum_{\mathcal{S}_i \in \mathbb{S}}q_i D(\mathcal{S}_i(Y)||\mathcal{S}_i(Y^{\prime}))
\end{equation}
\end{definition}
In other words, we take the expectation (over the random partition $Z$) of the relative entropy values between the coarsened versions of $Y$ and $Y^{\prime}$.

\subsubsection{Structured Mutual Information}
We can define the mutual information for structured random variables in the same manner as the standard mutual information, and it obeys the same symmetries.

\begin{definition}
The {\em structured mutual information} between $Y_Z$ and $X_W$ is given by:
\begin{equation}
I(Y_Z;X_W) = H(Y_Z) - H(Y_Z|X_W)
\end{equation}
\end{definition}
The structured mutual information is symmetric.
\begin{theorem}
$I(Y_Z;X_W) = I(X_W;Y_Z)$
\end{theorem}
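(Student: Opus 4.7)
The plan is to reduce the symmetry of structured mutual information to the already-known symmetry of standard Shannon mutual information, by showing that $I(Y_Z; X_W)$ decomposes as a convex combination of standard mutual informations between the coarsened variables $\mathcal{S}_i(Y)$ and $\mathcal{R}_j(X)$.

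First, I would expand the definition using the earlier identity $H(Y_Z) = \sum_i q_i H(\mathcal{S}_i(Y))$ and Definition~\ref{cond-str-ent}:
\begin{equation*}
I(Y_Z; X_W) = \sum_{\mathcal{S}_i \in \mathbb{S}} q_i H(\mathcal{S}_i(Y)) \;-\; \sum_{\mathcal{S}_i \in \mathbb{S}} \sum_{\mathcal{R}_j \in \mathbb{R}} q_i r_j\, H(\mathcal{S}_i(Y) \mid \mathcal{R}_j(X)).
\end{equation*}
Next, I would use the fact that $\sum_j r_j = 1$ to rewrite the first sum as a double sum $\sum_{i,j} q_i r_j H(\mathcal{S}_i(Y))$, so that the two sums share the same index structure and can be combined term-by-term into
\begin{equation*}
I(Y_Z; X_W) = \sum_{\mathcal{S}_i \in \mathbb{S}} \sum_{\mathcal{R}_j \in \mathbb{R}} q_i r_j \bigl[H(\mathcal{S}_i(Y)) - H(\mathcal{S}_i(Y) \mid \mathcal{R}_j(X))\bigr] = \sum_{i,j} q_i r_j\, I(\mathcal{S}_i(Y); \mathcal{R}_j(X)).
\end{equation*}

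Then I would invoke the classical symmetry of Shannon mutual information, which gives $I(\mathcal{S}_i(Y); \mathcal{R}_j(X)) = I(\mathcal{R}_j(X); \mathcal{S}_i(Y))$ for every fixed pair $(i,j)$. Since the weights $q_i r_j$ are symmetric under swapping the roles of $(Y,Z)$ and $(X,W)$ in the outer sum, the same derivation applied in the opposite direction yields $I(X_W; Y_Z) = \sum_{i,j} q_i r_j\, I(\mathcal{R}_j(X); \mathcal{S}_i(Y))$, and equality of the two expressions follows.

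I do not anticipate any real obstacle; the only subtle point is the bookkeeping step of inserting the factor $\sum_j r_j = 1$ (and symmetrically $\sum_i q_i = 1$ on the other side) so that both $H(Y_Z)$ and $H(Y_Z \mid X_W)$ are written over the same $(i,j)$ index set before collecting terms. Once this is done, the result is an immediate consequence of the convex-combination structure of structured entropy together with the standard symmetry of Shannon mutual information.
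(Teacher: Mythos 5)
Your proposal is correct and follows essentially the same route as the paper's own proof: insert the factor $\sum_j r_j = 1$ to express $H(Y_Z)$ as a double sum, collect terms into $\sum_{i,j} q_i r_j\, I(\mathcal{S}_i(Y);\mathcal{R}_j(X))$, and invoke the symmetry of standard mutual information termwise. No gaps.
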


\subsubsection{Joint Structured Entropy}
The notion of joint entropy similarly generalizes to the structured case.  To make this rigorous, we must first develop notation and operations to create joint partitions and structures from those of each individual variable.

\begin{definition}
Let $A\subseteq S_X$ and $B \subseteq S_Y$.  Define the set $(A,B) \subseteq S_X \times S_Y$ by:
\begin{equation}
(A,B) = \{(a,b): a \in A \mbox{ and }b\in B\}
\end{equation}
\end{definition}

\begin{definition}
Let $\mathcal{R}$ be a partition of $S_X$ and $\mathcal{S}$ be a partition of $S_Y$.  Define  ($\mathcal{R}, \mathcal{S}$) to be a partition of $S_X \times S_Y$ where:
\begin{equation}
(\mathcal{R}, \mathcal{S}) = \{(A,B): A \in \mathcal{R}  \mbox{ and } B \in \mathcal{S}\}
\end{equation}
\end{definition}

It is straightforward to verify that $(\mathcal{R}, \mathcal{S})$ is indeed a partition of $S_X \times S_Y$.

\begin{definition}
Define  $\mathbb{R} \times \mathbb{S}$ to be a structure of $S_X \times S_Y$ where:
\begin{equation}
\mathbb{R} \times \mathbb{S} = \{(\mathcal{R}_i, \mathcal{S}_j): \mathcal{R}_i \in \mathbb{R}  \mbox{ and } \mathcal{S}_j \in \mathbb{S}\}
\end{equation}
\end{definition}

\begin{definition}
Define the {\em joint random partition} $Z \times W$ by:
\begin{equation}
W \times Z = (\mathcal{R}_i, \mathcal{S}_j) \mbox{ with probability } r_i q_j
\end{equation}
\end{definition}

\begin{definition}
Define the {\em joint structured entropy} of $X_W$ and $Y_Z$ to be:
\begin{equation}
H(X_W, Y_Z) = H(X,Y)_{W\times Z}
\end{equation}
\end{definition}
That is, we treat the pair $(X,Y)$ as a single structured random variable with random partition $W \times Z$.  The joint structured entropy can be written as a sum in the same way as the standard variant.

\begin{theorem}
$H(X_W, Y_Z) = H(X_W) + H(Y_Z|X_W)$
\end{theorem}

\subsection{Entropy and Loss Functions}
\label{sec:lossfn}
In supervised learning, the purpose of a loss function is to measure the quality of a model's predictions.  There are typically two places where a loss function is used:
\begin{enumerate}
\item In iterative methods, to evaluate a current solution, and guide the next step of the iteration (e.g. gradient descent).  In this case, the training data is used to compute the loss function.
\item After one or more models have been created, to assess their quality and compare their performance.  Here, an independent test set is typically used.
\end{enumerate}

How does entropy relate to loss functions?  Consider the following thought experiment, which gives one interpretation of how it arises.  

Assume we have training data from some probability distribution $P(X,Y) = P(X)P(Y|X)$ represented by a set of $n$ paired observations $\{(x_l, y_l)\}$ where $X$ represents the predictor variable(s) and $Y$ represents the target.  For simplicity, assume $S_X$ and $S_Y$ are finite sets: $S_X = \{1,\ldots,m\}$ and $S_Y=\{1,\ldots,k\}$.  We wish to build a model using the training data such that given a test value $x \in S_X$, the model will output a probability distribution on $S_Y$.  Put another way, we wish to estimate the distribution $P(Y|X)$.  We represent this model by another distribution $Q(Y|X)$.  Let $p_{i,j}=P(Y=j|X=i)$ and let $q_{i,j} = Q(Y=j|X=i)$. The $p_{i,j}$ are the ``true" conditional probabilities and the $q_{i,j}$ are the model's corresponding estimates.  We would like to have a metric that measures the quality of our set of  $\{q_{i,j}\}$ given the true values $\{p_{i,j}\}$.  Let $r_i = P(X=i)$.  The $r_i$ values will enter into the metric as well, since it may be more important to have good estimates for the values of $X$ that are more probable.

For each value of $X$ we want to compare the true distribution $P(Y|X=i)$ to our estimated distribution $Q(Y|X=i)$.  The standard way to do this is to use KL-Divergence:
\begin{equation*}
D(P(Y|X=i)||Q(Y|X=i)) = \sum_{j \in S_Y} p_{i,j} \log \left(\frac{p_{i,j}}{q_{i,j}}\right)
\end{equation*}
Averaged across the different values of $X$ (and weighted by their respective probabilities) we get:
\begin{equation*}
\label{kldiv-obj}
D(P(Y|X)||Q(Y|X)) = \sum_{i\in S_X} r_i \sum_{j \in S_Y} p_{i,j} \log \left(\frac{p_{i,j}}{q_{i,j}}\right)
\end{equation*}
Note that the KL-divergence, conditioned in this way, also depends on the marginal distribution $P(X)$.

Simple algebra yields the following result:
\begin{equation*}
D(P(Y|X)||Q(Y|X)) = H^{\dag}(P(Y|X),Q(Y|X)) - H(Y|X)
\end{equation*}
where the cross-entropy $H^{\dag}$ is defined by:
\begin{equation*}
\label{crossent-obj}
H^{\dag}(P(Y|X),Q(Y|X)) = - \sum_{i\in S_X} r_i \sum_{j \in S_Y} p_{i,j} \log q_{i,j}
\end{equation*}
(Note again the implicit dependence on $P(X)$).

Since $H(Y|X)$ is an irreducible constant with respect to the estimate $Q$, minimizing the KL-divergence  is equivalent to minimizing the cross-entropy.

However, in practice, we do not know the values $p_{i,j}$ or $r_i$. (If we knew them, there would be no need to estimate them).  This complicates our ability to both train and evaluate our models.  Our solution is to replace the ``true" distributions $P(X)$ and $P(Y|X)$ with their empirical counterparts $P^{e}(X)$ and $P^{e}(Y|X)$ (based on either the training set or a test set).  The hope is that if the sets are large enough, the empirical distributions serve as a reasonable proxy for the true distribution (at least with respect to the loss function).  Therefore, when training a model, we try to minimize $H^{\dag}(P^{e}(Y|X),Q(Y|X))$ (which also depends on $P^{e}(X)$).  This {\em empirical cross-entropy} can be re-written as a summation over the data points.
\begin{equation}
H^{\dag}(P^{\mbox{e}}(Y|X),Q(Y|X)) = - \frac{1}{n}\sum_{l=1}^n \log{q_{x_l, y_l}}
\end{equation}
This quantity is also known as the {\em log-loss} and is equivalent to the negative log-likelihood of the data (divided by the number of data points).

Now suppose that, rather than the standard KL-divergence, we wish to use its structured alternative.  Let $Z$ be a random partition taking values in $\mathbb{S} = \{\mathcal{S}_t\}$ and let $w_t = P(Z =\mathcal{S}_t)$. The analogous derivation yields the following loss function for the {\em structured empirical cross-entropy}.
\begin{equation*}
H_Z^{\dag}(P^{e}(Y|X),Q(Y|X)) = - \frac{1}{n}\sum_{l=1}^n \sum_{\mathcal{S}_t \in \mathbb{S}} w_t \log{q_{x_l, \mathcal{S}_i(y_l)}}
\end{equation*}
where
\begin{equation*}
 q_{i, \mathcal{S}(y)} = \sum_{j \in \mathcal{S}(Y)} q_{i,j}
\end{equation*}

In other words, we compute the standard empirical cross-entropy at varying levels of ``coarseness" and average the results.  This demonstrates, in theory, how we can enable models to use knowledge of the structure simply by adjusting the loss function.

\section{Defining Structure}
\label{sec:howtostructure}

Given a classification problem where the target variable has an a priori known structure, we propose using the {\em structured cross-entropy} as a loss function with an appropriate random partition.  This is a straightforward generalization of standard cross-entropy, using the structured entropy defined in the previous section.  Precise details on this derivation can be found in the Appendix.  The increase in computation time (to compute the standard entropy for each partition) is typically negligible relative to the entire training time.  All that is required is computing a cross-entropy value for each distinct partition.

One advantage to defining structure with partitions is its flexibility in capturing different kinds of structure.  In this section we describe three kinds of structure and for each, discuss how one might define a random partition.

\subsection{Hierarchical Structure}
\label{sec:hierstruct}
Frequently, the outcome variable naturally has a hierarchical structure akin to an evolutionary tree.  The outcome may have an explicitly evolutionary basis (e.g. species or genomic data),  but not necessarily so.  Diseases in the ICD-10 system, pharmaceutical drugs, and other man-made objects have a structure where the universe of possibilities is divided into large groups, each of which are subsequently refined for several levels.

Partitions work well in this scenario.  Imagine a rooted tree where the leaves represent the possible values of the random variable $Y$.  There is a natural mapping between internal nodes of the tree and subset of leaves that can be reached by walking away from the root.  Therefore, any minimal subset of node which separates the leaves from the root defines a partition.  In this way, it is easy to create a meaningful set of partitions and assign weights to them.

As an explicit example of this kind of taxonomy, consider the CIFAR100 image classification problem (\cite{krizhevsky2009learning}).  This collection of images is labeled with one of 100 classes, which in turn are intentionally grouped into 20 ``superclasses".   We extend the hierarchy further by defining 2 more levels: {\em category} and {\em supercategory}.  Category has 8 possible values: {\em animals-water, animals-mammals, animals-human, animals-other, objects-vehicle, objects-other, plants, landscape} and  supercategory contains just the 4 values {\em animals, plants, objects, landscape}.  

Let $Y$ represent the outcome variable of an image classification problem on CIFAR100.  Using this hierarchy, we can mathematically define a structure $\mathbb{S}$ of $Y$ in the following way.  We define the following partitions:
\begin{eqnarray*}
\mathcal{S}_0 &=& \mbox{the singleton partition on } S_Y \\
\mathcal{S}_1 &=& \mbox{the superclass partition on } S_Y\\
\mathcal{S}_2 &=& \mbox{the category partition on } S_Y \\
\mathcal{S}_3 &=& \mbox{the supercategory partition on } S_Y
\end{eqnarray*}
We can then define a random partition $Z$ which takes $\mathcal{S}_0,\mathcal{S}_1,\mathcal{S}_2,\mathcal{S}_3$ each with probability .25 to create the structured random variable $Y_Z$.

\subsection{Circular Structure}
\label{sec:circular-structure}
Imagine a problem where the goal is to predict the month of an event, given some data about the event (e.g. weather data, sales data, etc.).  Let $Y$ be our target variable, $S_Y = \{1,2,3,\ldots, 12\}$ where each number represents the corresponding month of the year.  The natural structure of the months is circular, not hierarchical.  Structured entropy
permits defining a structure $\mathbb{S}$ of $Y$ in the following way.  We define the following partitions:
\begin{eqnarray*}
\mathcal{S}_0 &=& \mbox{the singleton partition on } S_Y \\
\mathcal{S}_1 &=&  \{ \{1,2,3\},\{4,5,6\}, \{7,8,9\},\{10,11,12\} \} \\
\mathcal{S}_2 &=&  \{ \{2,3,4\},\{5,6,7\}, \{8,9,10\},\{11,12,1\} \} \\
\mathcal{S}_3 &=& \{ \{3,4,5\},\{6,7,8\}, \{9,10,11\},\{12,1,2\} \}  
\end{eqnarray*}
and then let $\mathbb{S}_B = \{\mathcal{S}_0, \mathcal{S}_1, \mathcal{S}_2, \mathcal{S}_3\}$.  We could then assign probability $p_0$ to $\mathcal{S}_0$ and probability $(1-p_0)/3$ to $\mathcal{S}_1, \mathcal{S}_2$, and $\mathcal{S}_3$.  In this case, we call the free parameter $p_0$ the {\em singleton probability} or {\em singleton weight}.

\subsection{Graphical Structure}
\label{sec:graphical-structure}
Many categorical variables possess a structure that can be represented by a graph, where the nodes represent the distinct values and edges represent ``neighbors".  The most obvious example is territories on a political map (e.g. nations, states, counties).  Thus, ``meaningful" partitions correspond to connected sets in the graph. As an example, consider the 48 contiguous U.S. states.  One could partition them according to defined regions, and simultaneously consider different definitions of region.  For example, Pennsylvania could be in the Northeast in one partition, but the Midwest in another.  Some partitions could be coarse (4 regions) and others could contain finer resolution (10 or more regions).  By enumerating a set of partitions and corresponding probabilities, we could incorporate our prior beliefs about the similarities of the states in question.

\subsubsection{Variable Random Partition}
\label{sec:var_rand_part}
Another approach is to have a variable, rather than fixed, loss function.  At each iteration, we could randomly choose one or more partitions which respect the structure of the graph (i.e. where each block in the partition is a connected set in the graph).  There are several ways to do this, we will discuss one particular method that we employ in Section~\ref{sec:experiments}.  This approach has two nuisance parameters: the singleton weight $p_0$ and the partition size $m$.
\begin{enumerate}
\item{Randomly choose a spanning tree (e.g. using Wilson's algorithm (\cite{wilson1996generating})).}
\item{Randomly remove a set of $(m-1)$ edges, creating a vertex partition $\mathcal{S}_1$ into $m$ connected sets.}
\item{Create a random partition where the singleton partition and $\mathcal{S}_1$ have probabilities $p_0$ and $(1-p_0)$.}
\item{Repeat the (random) creation of a random partition at each iteration, yielding a different structured cross-entropy loss function each time.}
\end{enumerate}

\begin{table}[t]
\caption{Experiment 1A Details} 
\label{expertable}
\begin{center}
\begin{tabular}{l|l|l|l}
\small{Train Size}  & \small{Trials}  & \small{LR} & \small{Epochs}\\
\hline 
50000 & 1  & .00005 & 600 \\
10000 & 5  & .00005 & 600 \\
5000 & 10 & .00005 & 1000 \\
2500 & 10 & .00003 & 1500 \\
1000 & 10 & .00003 & 1500
\end{tabular}
\end{center}
\end{table}

\section{Experimental Results}
\label{sec:experiments}
In this section we demonstrate the practical utility of structured entropy and loss functions derived from it.  We consider three prediction problems, each reflecting a different kind of structure in the target variable. For all experiments, comprehensive details can be found in the Appendix and code is available at \url{www.github.com/numeristical/resources/StrEntExp}.

\subsection{Problem 1: CIFAR100}
The first prediction problem involves image classification on CIFAR100.   We performed 4 experiments, each designed to test a particular image recognition scenario.  The detailed trajectories of the training runs can be found in the Appendix.

\subsubsection{Experiment 1A}
In Experiment 1A, we trained an ``off-the-shelf" CNN (\cite{keras_example}) using both the standard cross-entropy loss and a structured cross-entropy loss (using the structure defined in Section~\ref{sec:hierstruct}).  We used training sets of various sizes, and for each size ran multiple trials on non-overlapping training sets. The training set sizes, number of trials and learning rates used are shown in Table~\ref{expertable} and we used the RMSProp optimizer.  For all the experiments in this section, basic data augmentation techniques were utilized.  We recorded various metrics on the test set at the end of each epoch, and then averaged the trajectories across the trials.  The metrics recorded include the standard cross-entropy loss, structured cross-entropy loss, and the accuracy at each of the coarsened levels. 

The results are shown in Table~\ref{restable1A}. Here, (and in all the tables in this section)  the values given refer to the best test set results over the course of the (averaged) run.  The headings {\em Acc2}, {\em Acc3}, and {\em Acc4} refer to the accuracy at the superclass, category, and supercategory level, respectively, while {\em Cr Ent} refers to the standard cross-entropy loss.  We see that the model trained on the structured variant of cross-entropy loss outperforms the one trained on the standard entropy across all the considered metrics at all sample sizes.  

\begin{table}[t]
\caption{Experiment 1A Results} 
\label{restable1A}
\begin{center}
\begin{tabular}{|l|l|l|l|l|l|l|}
\hline 
\small{Tr Size} & \small{Loss} & \small{Cr Ent}  & \small{Acc} & \small{Acc2} & \small{Acc3} & \small{Acc4}\\
\hline 
1000 & Std & 4.099  & .1475 & .2643 & .3650 & .4954  \\
1000 & Struc & 3.999  & .1569 & .2873 & .3930 & .5171 \\
\hline 
2500 & Std & 3.521  & .2345 & .3613 & .4576 & .5648  \\
2500 & Struc & 3.429  & .2405 & .3829 & .4804 & .5833 \\
\hline 
5000 & Std & 2.999  & .2993 & .4335 & .5216 & .6149  \\
5000 & Struc & 2.931  & .3077 & .4578 & .5458 & .6334 \\
\hline 
10000 & Std & 2.543  & .3678 & .5076 & .5860 & .6665  \\
10000 & Struc & 2.485  & .3725 & .5337 & .6122 & .6821 \\
\hline 
50000 & Std & 2.197  & .4365 & .5843 & .6502 & .7135  \\
50000 & Struc & 2.186   & .4369 & .5952 & .6652 & .7189 \\
\hline 
\end{tabular}
\end{center}
\end{table}

Looking deeper, we see a couple of other trends.  First, the gains are more significant with the smaller datasets than with the larger ones.  This is to be expected: the less data, the more important it is to exploit prior information.  If you have 500 oaks and 500 maples in your training set, it may not hurt you to ignore that they are both kinds of trees, compared to having only 10 of each.  Second, the gains tend to be higher at the coarser levels than at the finer levels.  This also makes sense, since there is more aggregation of examples higher up in the hierarchy.

\subsubsection{Experiment 1B}
Experiment 1B was designed to determine if the improved performance arose from the coherent choice of hierarchy, or merely some arbitrary regularization.  We took the training sets of size 5,000 and trained them on a third variant of the loss function (referred to as the ``scrambled" variant).  In this variant, we used the structured entropy loss, but randomly permuted the classes.  Therefore, it had the same structural ``framework" as the structured cross-entropy, but the members in each group were randomly chosen and therefore unlikely to provide any coherence or similarity.  The same random permutation was used for all the trials.  

The results are shown in Figure~\ref{fig:exp1b}. We see that the ``scrambled" variant underperforms, demonstrating the improvement is truly dependent on the meaningful structure of the image classes.  

\begin{figure}[t]
\centering
    \begin{subfigure}[b]{0.25\textwidth}            
            \includegraphics[width=\textwidth]{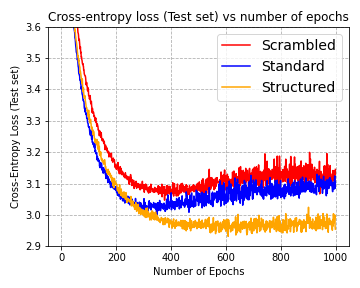}
            \caption{Cross-Entropy Loss}
            \label{fig:exp1bloss}
    \end{subfigure}%
    \begin{subfigure}[b]{0.25\textwidth}
            \centering
            \includegraphics[width=\textwidth]{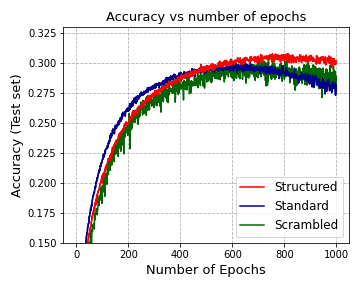}
            \caption{Accuracy}
            \label{figexp1bacc}
    \end{subfigure}
   \caption{Experiment 1B Results (5K training set)}\label{fig:exp1b}
\end{figure}

\subsubsection{Experiment 1C}
Experiment 1C was designed to address whether the gains seen in Experiment 1A would occur on modern CNN architectures with pre-trained weights.  We used ResNet50 (~\cite{he2016deep}), and MobileNetV2 (~\cite{sandler2018mobilenetv2}) initialized with pre-trained weights from ImageNet (~\cite{imagenet-cvpr09}).  This time we used the entire 50K image training set, and the Adam optimizer with learning rate of 0.00001.  The entire network was trained, with the intuition that even the primitive features learned in the early layers could benefit from understanding the structure.  The results, found in Table~\ref{restable1C} largely echo what was found in Experiment 1A.  The models trained on structured cross-entropy outperformed the standard counterpart on all metrics.  Exploiting the structure in the training images adds value despite the fact that these neural networks were pre-trained on a much larger dataset.

\begin{table}[b]
\caption{Experiment 1C Results} 
\label{restable1C}
\begin{center}
\begin{tabular}{|l|l|l|l|l|l|l|}
\hline 
\small{Model} & \small{Loss} & \small{Cr Ent}  & \small{Acc} & \small{Acc2} & \small{Acc3} & \small{Acc4}\\
\hline 
RN50 & Std & .7956  & .8028 & .8857 & .9085 & .9233  \\
RN50 & Str & .7772  & .8075 & .8926 & .9147 & .9276 \\
\hline 
MN2 & Std & .7260  & .8051 & .8894 & .9109 & .9264  \\
MN2 & Str & .7224  & .8080 & .8943 & .9166 & .9313 \\
\hline 
\end{tabular}
\end{center}
\end{table}

\begin{table}[b]
\caption{Experiment 1D Results} 
\label{restable1D}
\begin{center}
\begin{tabular}{|l|l|l|l|l|l|}
\hline 
 \small{Loss} & \small{Cr Ent}  & \small{Acc} & \small{Acc2} & \small{Acc3} & \small{Acc4}\\
\hline 
 Std & 2.556  & .3620 & .5300 & .6132 & .6772  \\
 Str & 2.528  & .3658 & .5414 & .6260 & .6878 \\
\hline 
\end{tabular}
\end{center}
\end{table}

\subsubsection{Experiment 1D}
Experiment 1D was designed to see if the modified loss function would help in a transfer learning problem, where only the last layer of the network is trained.  We used the MobileNetV2 architecture with weights from ImageNet trained on  small training sets: just 1,000 images.  We repeated 3 trials and averaged the results, using the SGD optimizer with learning rate .002 for 300 epochs.

The results are shown in Table~\ref{restable1D}.  We see the same general pattern as in the previous experiments: a small gain in accuracy at the class level, with more significant gains at the coarser levels, and a lower cross-entropy.   This demonstrates that structured entropy can add value in a transfer learning scenario, which often involves small training data sets.

\subsection{Problem 2: Month Prediction}
To demonstrate the flexibility of the structured cross-entropy loss function, we ran experiments on a weather data set collected from NOAA (\url{https://www.ncdc.noaa.gov/cdo-web/}).  Each row of the dataset contains weather observations on a particular day from a weather station in California.  We consider the problem of trying to predict the month of the year given information about the weather conditions (temperature and precipitation).

We implemented gradient boosting with vector-valued leafs and diagonal Hessians as discussed in \cite{CompactMCTrees}.  For various training set sizes, we compare using the standard cross-entropy loss to two structured variants: a structured cross-entropy loss using the fixed structure defined in Section~\ref{sec:circular-structure}, and a variable structured cross-entropy loss as described in Section~\ref{sec:var_rand_part} with a partition size of 4.  We vary the singleton weight $p_0$ from 0.1 to 0.9 and consider the log loss on an independent test set.  The results (averaged over 5 trials, and optimized over max-depth values) are in Figure~\ref{fig:expgb2a}.  We see that the fixed structure outperforms standard cross-entropy for all choices of $p_0$, with the best performance achieved with $p_0$ around 0.2 or 0.3.  Moreover, the random structure outperforms standard cross entropy for all but the smallest values of $p_0$, doing best around $p_0 =0.5$.

\begin{figure}[t]
\centering
\includegraphics[scale=0.32]{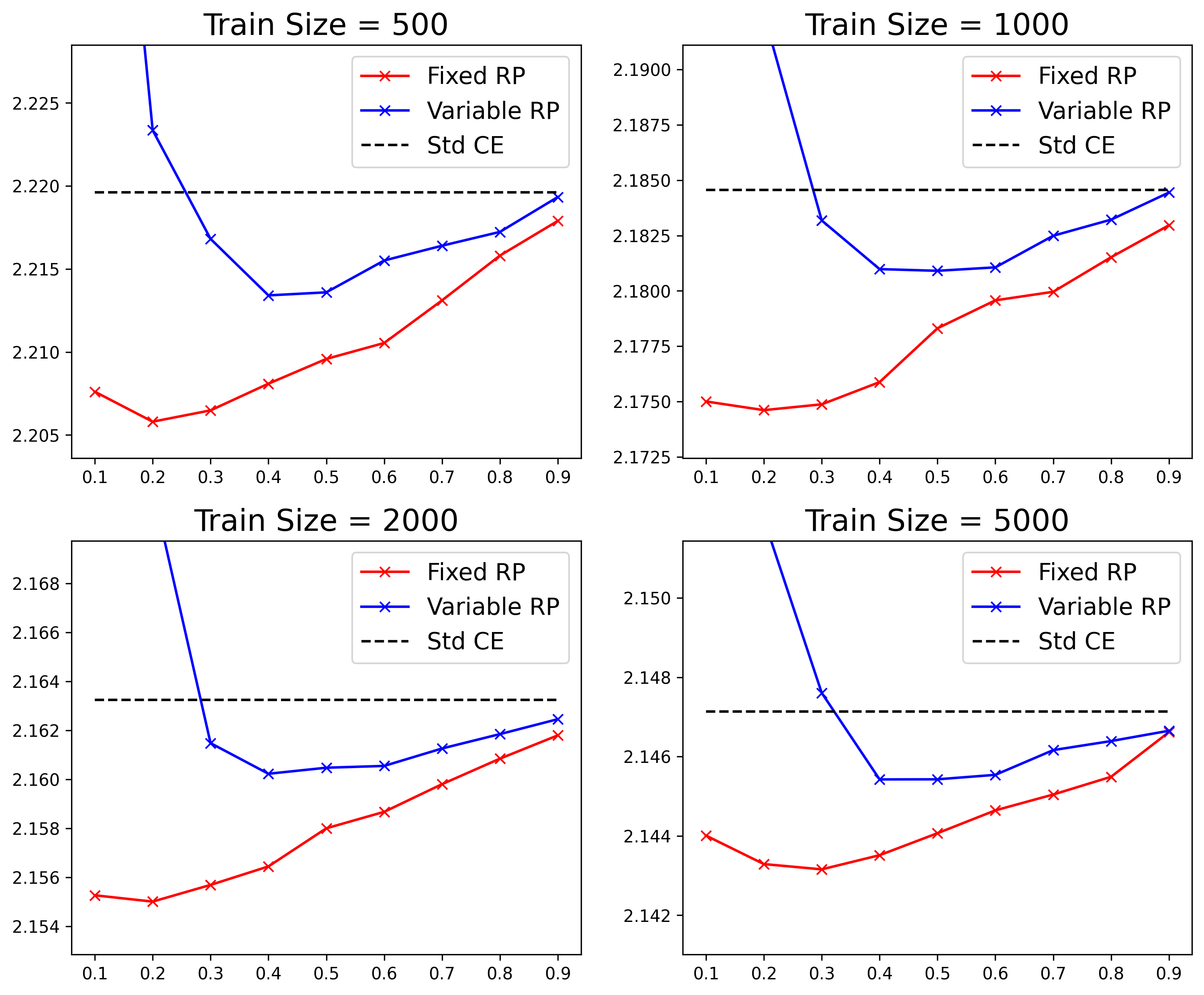}
\caption{Log Loss vs Singleton Weight}
\label{fig:expgb2a}
\end{figure}

\subsection{Problem 3: County Prediction}
To show the utility of this approach on more complicated structures, we consider the same dataset as the previous experiment, except that we try to predict the county of the observation, rather than the month.  The counties of California have a natural structure given by geographical adjacency which can be represented by a graph.  Since creating fixed random partitions for this problem could be tedious (though by no means impossible), we consider just the variable random partition approach for this problem, with 5 different combinations of partition size and singleton weight.  The results (averaged over 5 trials, and optimized over max-depth values) are in Table~\ref{grid-table2}.  We see that all 5 combinations of nuisance parameters outperformed standard cross-entropy.  Generally, choosing a random partition size of 10 and a singleton weight of 0.5 did best.  This shows that the variable random partition approach is feasible, valuable, and not overly dependent on specific values of the nuisance parameters.

\section{Summary}
We defined a variant of entropy, called {\em structured entropy} which can account for the similarity of the different values of the variable in question.  Our approach defines this similarity by means of a {\em random partition}, effectively putting weights on different coarsenings of the state space.  Due to the simplicity of the construction, we are easily able to prove it retains many of the desirable properties of standard Shannon entropy.  The tight link between entropy, cross-entropy, and KL-Divergence motivates the use of structured cross-entropy as a loss function for classification problems where there is known a-priori structure in the target variable.  Moreover, the approach is applicable to a wide variety of types of structure, as opposed to many existing techniques that are restricted to hierarchical structure.  Finally, as opposed to methods based on optimal transport, it is feasible for use in methods like neural networks and gradient boosting, which require the computation of the loss function at each iteration.

We demonstrate the effectiveness of the approach on several different classification problems with different kinds of structure in the target variable.  Our experiments on CIFAR-100 show that it can be valuable when training models from scratch, on large models with pre-trained weights, and in a transfer learning scenario with small datasets.  We also show it can be used for gradient boosting, and on circular and graphical structures that are not hierarchical in nature.  For graph-based structures, we show that using a variable random partition can prove to be a viable alternative in cases where it is undesirable to manually create a single random partition.

This work suggests several areas for future exploration.  One question that arises is whether it would be possible to learn the structure (e.g. from a confusion matrix) and then exploit it.  In a similar vein, this may provide an avenue for creating a ``hypothesis test" for structure: if adding a priori structural knowledge improves performance, then it suggests the structure is meaningful and significant.  These ideas may be of interest to the structure learning community.  More directly, there may be classification problems in other fields for which defining structure via partitions is a tractable approach and leads to better predictive performance.

\begin{table}[t]
\caption{Log Loss values for County Prediction} \label{grid-table2}
\begin{center}
\begin{tabular}{|r|c|c|c|c|c|c|}
\hline
 \multicolumn{2}{|c|}{Part. Size} & 10 & 10 & 10 & 5 & 20 \\
\hline
\multicolumn{2}{|c|}{Singleton Wt} &0.25&0.5&0.75 & 0.5 & 0.5 \\
\hline
Train &Stnd & \multicolumn{5}{c|}{} \\
Size& Cr Ent & \multicolumn{5}{c|}{}\\
\hline 
500       & 3.883 &  \textbf{3.865} &3.866& 3.876  & 3.868 & 3.869\\
\hline 
1000         & 3.650 & 3.639 & \textbf{3.634} & 3.643 & 3.636 & 3.637\\
\hline 
2000         & 3.534 & 3.532 & \textbf{3.521} & 3.529 & 3.523 & 3.522\\
\hline 
5000        &3.473 & 3.466 & \textbf{3.462}  & 3.468 & 3.465 & 3.463\\
\hline 
10K         & 3.441 & 3.440 & \textbf{3.434} & 3.438 & 3.435 & 3.434\\
\hline 
20K         & 3.417 & 3.416 & \textbf{3.412} & 3.415 & 3.413 & \textbf{3.412}\\
\hline
\end{tabular}
\end{center}
\end{table}

\bibliography{str_ent_arxiv}
%\bibliographystyle{icml2022}

%%%%%%%%%%%%%%%%%%%%%%%%%%%%%%%%%%%%%%%%%%%%%%%%%%%%%%%%%%%%%%%%%%%%%%%%%%%%%%%
%%%%%%%%%%%%%%%%%%%%%%%%%%%%%%%%%%%%%%%%%%%%%%%%%%%%%%%%%%%%%%%%%%%%%%%%%%%%%%%
% APPENDIX
%%%%%%%%%%%%%%%%%%%%%%%%%%%%%%%%%%%%%%%%%%%%%%%%%%%%%%%%%%%%%%%%%%%%%%%%%%%%%%%
%%%%%%%%%%%%%%%%%%%%%%%%%%%%%%%%%%%%%%%%%%%%%%%%%%%%%%%%%%%%%%%%%%%%%%%%%%%%%%%
\newpage
\appendix
\onecolumn
\section{Appendix}
This Appendix contains 3 parts:
\begin{itemize}
\item{Proofs of Theorems in Section 2}
\item{Additional Experimental Details}
\item{CIFAR100 hierarchy definition}
\end{itemize}

\subsection{Proofs of Theorems in Section 2}
\begin{theorem}
Let $Y$ be a structured random variable with a random partition $Z$ defined on a structure $\mathbb{S}$ over $S(Y)$.  Let $X$ be a structured random variable with a random partition $W$ defined on a structure $\mathbb{R}$ over $S(X)$.  
Let $q_i = P(Z=\mathcal{S}_i)$ and let $r_i = P(W=\mathcal{R}_i)$.
  Then $H(Y_Z|X_W) \leq H(Y_Z)$
\end{theorem}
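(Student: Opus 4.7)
The plan is to reduce the claim to the classical fact that conditioning on an auxiliary variable cannot increase Shannon entropy, applied separately for each pair of partitions in $\mathbb{S} \times \mathbb{R}$, and then to collapse the double sum using that the probabilities $r_j$ sum to one.

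First I would invoke the standard inequality $H(U|V) \leq H(U)$ for ordinary Shannon entropy. Fix any partition $\mathcal{S}_i \in \mathbb{S}$ and any partition $\mathcal{R}_j \in \mathbb{R}$. Since $\mathcal{S}_i(Y)$ and $\mathcal{R}_j(X)$ are just ordinary (discrete) random variables obtained by deterministic coarsening, the classical result immediately gives
\begin{equation*}
H(\mathcal{S}_i(Y) \mid \mathcal{R}_j(X)) \;\leq\; H(\mathcal{S}_i(Y)).
\end{equation*}
No structural facts about the random partitions $Z$ or $W$ are needed at this stage; all that is used is that $\mathcal{S}_i(Y)$ is a function of $Y$ with finite range and similarly for $\mathcal{R}_j(X)$.

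Next I would multiply both sides by the nonnegative weight $q_i r_j$ and sum over $i$ and $j$. By Definition~\ref{cond-str-ent},
\begin{equation*}
H(Y_Z|X_W) \;=\; \sum_{\mathcal{S}_i \in \mathbb{S}} \sum_{\mathcal{R}_j \in \mathbb{R}} q_i r_j\, H(\mathcal{S}_i(Y)\mid \mathcal{R}_j(X)) \;\leq\; \sum_{\mathcal{S}_i \in \mathbb{S}} \sum_{\mathcal{R}_j \in \mathbb{R}} q_i r_j\, H(\mathcal{S}_i(Y)).
\end{equation*}
Since the summand on the right does not depend on $j$, I can factor and use $\sum_j r_j = 1$ to obtain $\sum_i q_i H(\mathcal{S}_i(Y))$, which is exactly $H(Y_Z)$ by the definition of structured entropy. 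This yields the desired inequality.

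There is no real obstacle here: the structured entropy is a convex combination of standard entropies, the structured conditional entropy is the corresponding convex combination of standard conditional entropies, and the monotonicity carries through term-by-term. The only thing worth noting is that one should be careful that $\mathcal{R}_j(X)$ is well-defined as a coarsened random variable (so that the classical inequality applies for each $(i,j)$), but this is immediate from the definitions in Section~\ref{sec:defentr}.
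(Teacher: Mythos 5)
Your proof is correct and follows essentially the same route as the paper's: apply the classical inequality $H(\mathcal{S}_i(Y)\mid\mathcal{R}_j(X)) \leq H(\mathcal{S}_i(Y))$ term-by-term, then collapse the sum over $j$ using $\sum_j r_j = 1$ to recover $H(Y_Z)$. No gaps; nothing further is needed.
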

\begin{proof}
This follows directly by handling each combination of partitions individually and using the corresponding law from standard entropy.  \begin{eqnarray*}
H(Y_Z|X_W) &=& \sum_{\mathcal{S}_i \in \mathbb{S}} \sum_{\mathcal{R}_j\in \mathbb{R}} q_i r_j H(\mathcal{S}_i(Y)|\mathcal{R}_j(X)) \\
		    &\leq& \sum_{\mathcal{S}_i \in \mathbb{S}} \sum_{\mathcal{R}_j\in \mathbb{R}} q_i r_j H(\mathcal{S}_i(Y)) \\
		    &=& \sum_{\mathcal{S}_i \in \mathbb{S}}  q_i  H(\mathcal{S}_i(Y)) \left(\sum_{\mathcal{R}_j\in \mathbb{R}}r_j\right)\\
		    &=& H(Y_Z)
\end{eqnarray*}
\end{proof}

The next two theorems assume the same setup as Theorem 2.1.

\begin{theorem}
The structured mutual information is symmetric.
\begin{equation}
I(Y_Z;X_W) = I(X_W;Y_Z)
\end{equation}
\end{theorem}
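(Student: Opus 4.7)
The plan is to reduce the structured symmetry to standard Shannon symmetry by pushing everything into the double sum indexed by $(\mathcal{S}_i, \mathcal{R}_j)$, applying pointwise symmetry, and reassembling. The key observation is that the structured mutual information is literally a convex combination of standard mutual informations, one for each pair of partitions $(\mathcal{S}_i, \mathcal{R}_j)$, and each of those is symmetric.

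First I would expand $I(Y_Z; X_W) = H(Y_Z) - H(Y_Z \mid X_W)$ using Definition of structured entropy and Definition~\ref{cond-str-ent}. Using $\sum_j r_j = 1$, I would rewrite
\begin{equation*}
H(Y_Z) = \sum_{\mathcal{S}_i \in \mathbb{S}} q_i H(\mathcal{S}_i(Y)) = \sum_{\mathcal{S}_i \in \mathbb{S}} \sum_{\mathcal{R}_j \in \mathbb{R}} q_i r_j H(\mathcal{S}_i(Y))
\end{equation*}
so that both terms are indexed by the same double sum. Subtracting gives
\begin{equation*}
I(Y_Z; X_W) = \sum_{\mathcal{S}_i \in \mathbb{S}} \sum_{\mathcal{R}_j \in \mathbb{R}} q_i r_j \bigl[ H(\mathcal{S}_i(Y)) - H(\mathcal{S}_i(Y) \mid \mathcal{R}_j(X)) \bigr] = \sum_{i,j} q_i r_j\, I(\mathcal{S}_i(Y); \mathcal{R}_j(X)),
\end{equation*}
a weighted average of standard mutual informations.

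Next I would invoke the standard (Shannon) symmetry $I(\mathcal{S}_i(Y); \mathcal{R}_j(X)) = I(\mathcal{R}_j(X); \mathcal{S}_i(Y))$ termwise, then run the algebra in reverse: expand each standard mutual information as $H(\mathcal{R}_j(X)) - H(\mathcal{R}_j(X) \mid \mathcal{S}_i(Y))$, split the double sum, and use $\sum_i q_i = 1$ to collapse the first piece into $H(X_W)$. The second piece is, by Definition~\ref{cond-str-ent} applied in the swapped roles, exactly $H(X_W \mid Y_Z)$. This yields $I(Y_Z; X_W) = H(X_W) - H(X_W \mid Y_Z) = I(X_W; Y_Z)$.

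There is no real obstacle here — the proof is essentially bookkeeping. The only thing to be careful about is that Definition~\ref{cond-str-ent} is symmetric in how it treats the two random partitions (both $q_i r_j$ factors appear), so reversing the roles of $(Y_Z)$ and $(X_W)$ produces the same double sum with $H(\mathcal{R}_j(X) \mid \mathcal{S}_i(Y))$ in place of $H(\mathcal{S}_i(Y) \mid \mathcal{R}_j(X))$; this is what makes the reassembly step legal.
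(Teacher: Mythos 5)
Your proposal is correct and follows essentially the same route as the paper's own proof: rewrite $H(Y_Z)$ as a double sum over partition pairs using $\sum_j r_j = 1$, express the structured mutual information as a $q_i r_j$-weighted combination of standard mutual informations $I(\mathcal{S}_i(Y);\mathcal{R}_j(X))$, apply Shannon symmetry termwise, and reassemble into $H(X_W) - H(X_W\mid Y_Z)$. No gaps; your remark about Definition~\ref{cond-str-ent} treating the two random partitions symmetrically is exactly the point that makes the reassembly valid.
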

\begin{proof}
Since $\sum_{\mathcal{R}_j \in \mathbb{R}} r_j  = 1$, we can rewrite $H(Y_Z)$ in the following way:
\begin{eqnarray*}
H(Y_Z) &=&\sum_{\mathcal{S}_i \in \mathbb{S}} q_i H(\mathcal{S}_i(Y)) \left(\sum_{\mathcal{R}_j \in \mathbb{R}} r_j \right) \\
H(Y_Z) &=& \sum_{\mathcal{S}_i \in \mathbb{S}} \sum_{\mathcal{R}_j \in \mathbb{R}}q_i  r_j H(\mathcal{S}_i(Y))  
\end{eqnarray*}
and therefore:
\begin{equation*}
I(Y_Z;X_W)= \sum_{\mathcal{S}_i \in \mathbb{S}} \sum_{\mathcal{R}_j \in \mathbb{R}}q_i  r_j (H(\mathcal{S}_i(Y)  - H(\mathcal{S}_i(Y)|\mathcal{R}_j(X)) \\
\end{equation*} 
Since $\mathcal{S}_i(Y)$ and $\mathcal{R}_j(X)$ are standard random variables, we can apply the symmetry of their mutual information
\begin{equation*}
I(Y_Z;X_W)= \sum_{\mathcal{S}_i \in \mathbb{S}} \sum_{\mathcal{R}_j \in \mathbb{R}}q_i  r_j (H(\mathcal{R}_i(X)  - H(\mathcal{R}_i(X)|\mathcal{S}_j(Y)) \\
\end{equation*} 
and therefore
\begin{equation*}
I(Y_Z;X_W)= H(X_W) -H(X_W|Y_Z)  = I(X_W;Y_Z)\\
\end{equation*} 
as desired.
\end{proof}

\begin{theorem}
$H(X_W, Y_Z) = H(X_W) + H(Y_Z|X_W)$
\end{theorem}

\begin{proof}
The proof is straightforward from the definitions.
\begin{eqnarray*}
H(X_W, Y_Z) &=& H((X,Y)_{W\times Z} \\
		      &=& \sum_{(\mathcal{R}_i , \mathcal{S}_j )\in \mathbb{R} \times \mathbb{S}}  r_i q_j H((\mathcal{R}_i , \mathcal{S}_j )(X,Y)) \\
		      &=& \sum_{\mathcal{R}_i \in \mathbb{R}}\sum_{\mathcal{S}_j \in \mathbb{S}}  r_i q_j H((\mathcal{R}_i(X), \mathcal{S}_j (Y))) \\
		      &=&  \sum_{\mathcal{R}_i \in \mathbb{R}}\sum_{\mathcal{S}_j \in \mathbb{S}}  r_i q_j H(\mathcal{R}_i(X)) \\
		      & & + \sum_{\mathcal{R}_i \in \mathbb{R}}\sum_{\mathcal{S}_j \in \mathbb{S}}  r_i q_j H(\mathcal{S}_j (Y))|\mathcal{R}_i(X)) \\
		      &=& H(X_W) + H(Y_Z|X_W)
\end{eqnarray*}
\end{proof}

\vfill
\pagebreak

\subsection{Additional Experimental Details}
In this section, we show the trajectories of (standard) cross-entropy loss, accuracy, and the coarsened variants of accuracy.

\subsubsection{Experiment 1A Details}
For the most part, we see a clear separation between the trajectories of the runs using standard cross entropy and structured cross-entropy.  This demonstrates that the numerical results in the paper were not a coincidence of a single favorable epoch.  The trajectories on the larger data sets show more spikiness for two reasons.  First, an epoch represents more updates on a larger training set than on a smaller one.  Second, we are averaging over more trials on the smaller data sets (since we can create more non-overlapping training sets).  For the 50K dataset there is only one trial, as only one dataset is possible.

\begin{figure}[h]
\centering
    \begin{subfigure}[h]{0.33\textwidth}            
            \includegraphics[width=\textwidth]{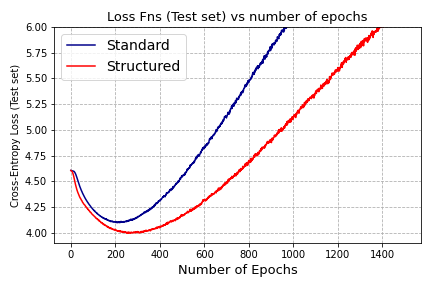}
            \caption{Cross-Entropy Loss (1K)}
            \label{fig:exp2loss}
    \end{subfigure}%
    \begin{subfigure}[h]{0.33\textwidth}
            \centering
            \includegraphics[width=\textwidth]{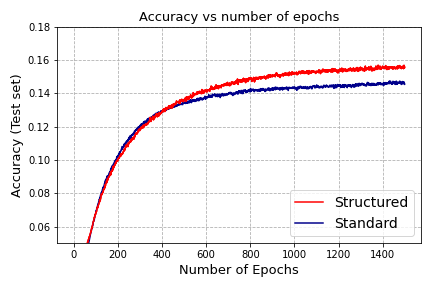}
            \caption{Accuracy (1K)}
            \label{figexp2acc}
    \end{subfigure}
    \begin{subfigure}[h]{0.33\textwidth}
            \centering
            \includegraphics[width=\textwidth]{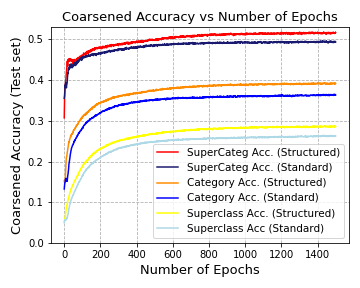}
            \caption{Coarsened Accuracy (1K)}
            \label{figexp2acc}
    \end{subfigure}
   \caption{Results (size 1K training set)}\label{fig:exp4ab}
\end{figure}

\begin{figure}[h]
\centering
    \begin{subfigure}[h]{0.33\textwidth}            
            \includegraphics[width=\textwidth]{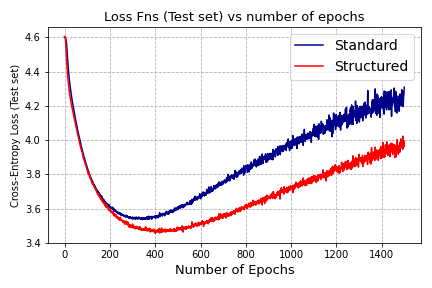}
            \caption{Cross-Entropy Loss (2.5K)}
            \label{fig:exp2loss}
    \end{subfigure}%
    \begin{subfigure}[h]{0.33\textwidth}
            \centering
            \includegraphics[width=\textwidth]{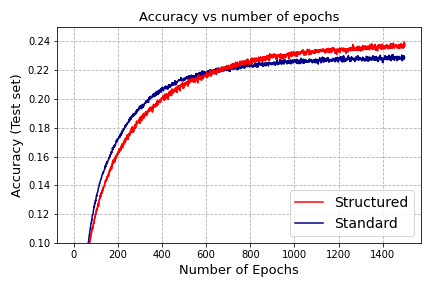}
            \caption{Accuracy (2.5K)}
            \label{figexp2acc}
    \end{subfigure}
    \begin{subfigure}[h]{0.33\textwidth}
            \centering
            \includegraphics[width=\textwidth]{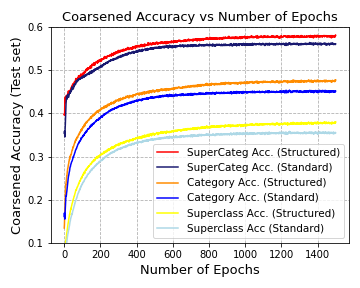}
            \caption{Coarsened Accuracy (2.5K)}
            \label{figexp2acc}
    \end{subfigure}
   \caption{Results (size 2,500 training set)}\label{fig:exp4ab}
\end{figure}

\begin{figure}[h!]
\centering
    \begin{subfigure}[h]{0.33\textwidth}            
            \includegraphics[width=\textwidth]{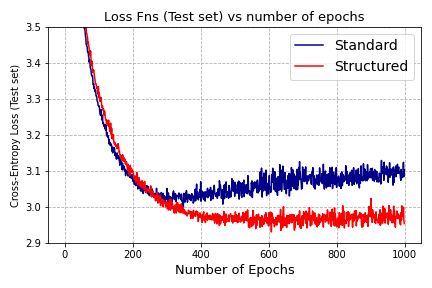}
            \caption{Cross-Entropy Loss (5K)}
            \label{fig:exp2loss}
    \end{subfigure}%
    \begin{subfigure}[h]{0.33\textwidth}
            \centering
            \includegraphics[width=\textwidth]{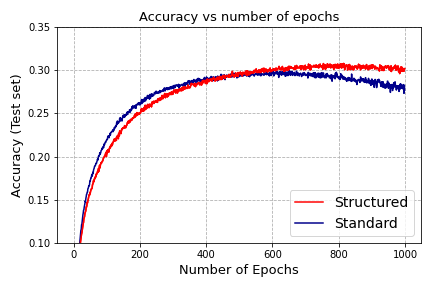}
            \caption{Accuracy (5K)}
            \label{figexp2acc}
    \end{subfigure}
    \begin{subfigure}[h]{0.33\textwidth}
            \centering
            \includegraphics[width=\textwidth]{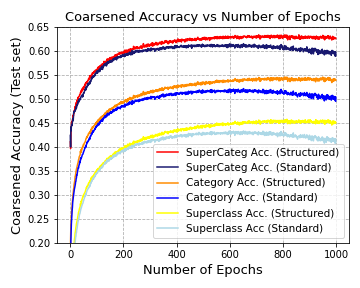}
            \caption{Coarsened Accuracy (5K)}
            \label{figexp2acc}
    \end{subfigure}
   \caption{Results (size 5K training set)}\label{fig:exp2ab}
\end{figure}

\begin{figure}[h!]
\centering
    \begin{subfigure}[h]{0.33\textwidth}            
            \includegraphics[width=\textwidth]{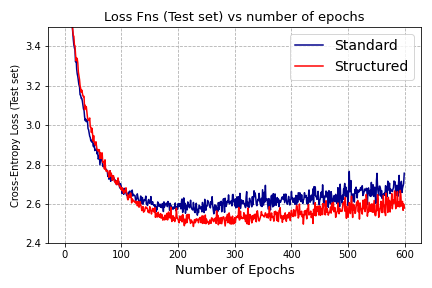}
            \caption{Cross-Entropy Loss (10K)}
            \label{fig:exp2loss}
    \end{subfigure}%
    \begin{subfigure}[h]{0.33\textwidth}
            \centering
            \includegraphics[width=\textwidth]{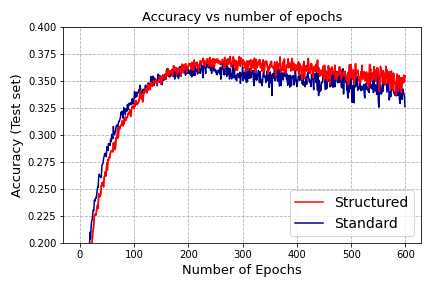}
            \caption{Accuracy (10K)}
            \label{figexp2acc}
    \end{subfigure}
    \begin{subfigure}[h]{0.33\textwidth}
            \centering
            \includegraphics[width=\textwidth]{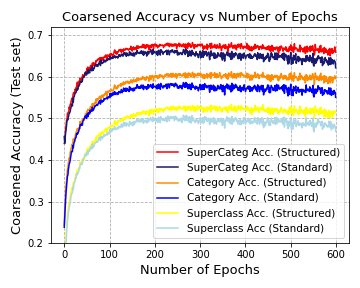}
            \caption{Coarsened Accuracy (10K)}
            \label{figexp2acc}
    \end{subfigure}
   \caption{Results (size 10K training set)}\label{fig:exp5ab}
\end{figure}

\begin{figure}[h!]
\centering
    \begin{subfigure}[b]{0.33\textwidth}            
            \includegraphics[width=\textwidth]{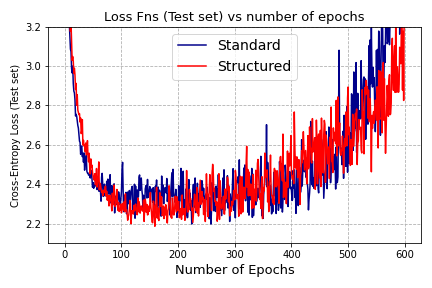}
            \caption{Cross-Entropy Loss (50K)}
            \label{50Kloss}
    \end{subfigure}%
    \begin{subfigure}[b]{0.33\textwidth}
            \centering
            \includegraphics[width=\textwidth]{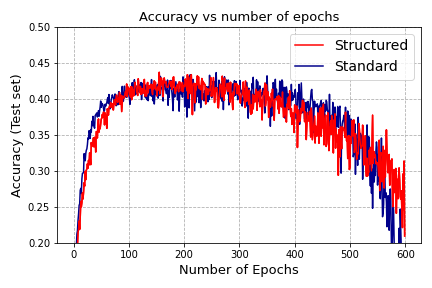}
            \caption{Accuracy (50K)}
            \label{figexp2acc}
    \end{subfigure}
    \begin{subfigure}[b]{0.33\textwidth}
            \centering
            \includegraphics[width=\textwidth]{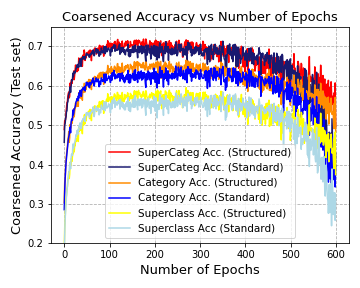}
            \caption{Coarsened Accuracy (50K)}
            \label{figexp2acc}
    \end{subfigure}
   \caption{Results (size 50K training set)}\label{simple50K}
\end{figure}
\vspace{4in}

For the 50K training set the separation is less clear, but still the cross-entropy and coarsened accuracies shows a better trajectory, especially between epochs 100 and 200 where performance is best.

\vspace{3in}
\vfill
\pagebreak[4]
\subsubsection{Trajectories for Experiment 1C}
Here we display the full trajectories for Experiment 1C: for ResNet50 and MobileNetV2.  In both cases we trained on the entire 50K training set.  Again, we generally see clear separation between the trajectories.

\begin{figure}[h!]
\centering
    \begin{subfigure}[h]{0.33\textwidth}            
            \includegraphics[width=\textwidth]{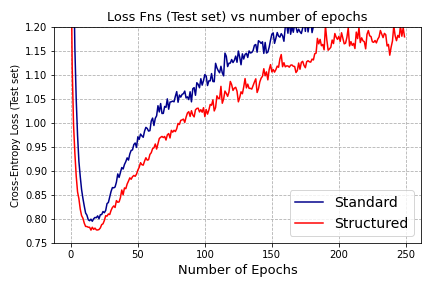}
            \caption{Cross-Entropy Loss}
            \label{fig:exp2loss}
    \end{subfigure}%
    \begin{subfigure}[h]{0.33\textwidth}
            \centering
            \includegraphics[width=\textwidth]{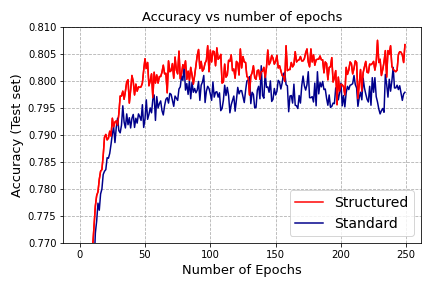}
            \caption{Accuracy}
            \label{figexp2acc}
    \end{subfigure}
    \begin{subfigure}[h]{0.33\textwidth}
            \centering
            \includegraphics[width=\textwidth]{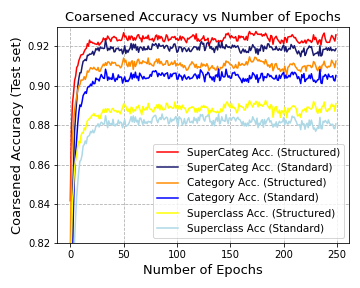}
            \caption{Coarsened Accuracy}
            \label{figexp2acc}
    \end{subfigure}
   \caption{Results (ResNet50 50K)}\label{fig:exp4ab}
\end{figure}

\begin{figure}[h!]
\centering
    \begin{subfigure}[h]{0.33\textwidth}            
            \includegraphics[width=\textwidth]{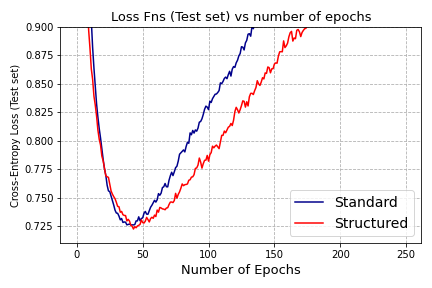}
            \caption{Cross-Entropy Loss}
            \label{fig:exp2loss}
    \end{subfigure}%
    \begin{subfigure}[h]{0.33\textwidth}
            \centering
            \includegraphics[width=\textwidth]{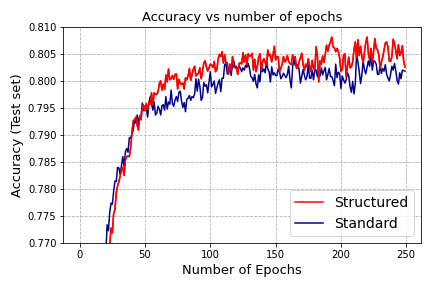}
            \caption{Accuracy}
            \label{figexp2acc}
    \end{subfigure}
    \begin{subfigure}[h]{0.33\textwidth}
            \centering
            \includegraphics[width=\textwidth]{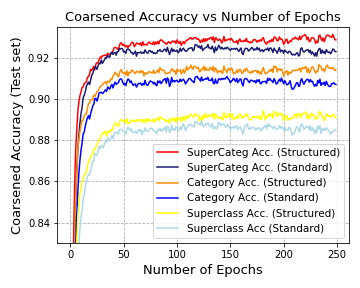}
            \caption{Coarsened Accuracy}
            \label{figexp2acc}
    \end{subfigure}
   \caption{Results (MobileNetV2 50K)}\label{fig:exp4ab}
\end{figure}

\vfill
\pagebreak

\subsection{Trajectories for Experiment 1D}
Here we display the full trajectories for Experiment 1D, which only re-trained the top layer.

\begin{figure}[h!]
\centering
    \begin{subfigure}[b]{0.35\textwidth}            
            \includegraphics[width=\textwidth]{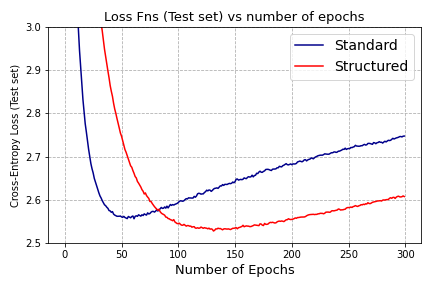}
            \caption{Losses}
            \label{fig:exp3loss}
    \end{subfigure}%
     %add desired spacing between images, e. g. ~, \quad, \qquad etc.
      %(or a blank line to force the subfigure onto a new line)
    \begin{subfigure}[b]{0.25\textwidth}
            \centering
            \includegraphics[width=\textwidth]{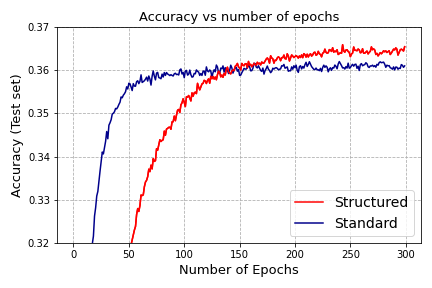}
            \caption{Accuracy}
            \label{fig:exp3acc}
    \end{subfigure}
    \begin{subfigure}[b]{0.35\textwidth}
            \centering
            \includegraphics[width=\textwidth]{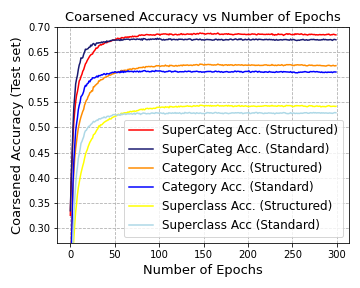}
            \caption{Accuracy}
            \label{fig:exp3coarse}
    \end{subfigure}
   \caption{Experiment 1D Results }\label{fig:exp3ab}
\end{figure}

\vfill
\pagebreak[4]
\subsection{CIFAR100 HIERARCHY}
Here we give details of how the 100 classes mapped to superclass, category and subcategory.  The mapping in Table~\ref{cl-scl-table} (Class to Superclass) is included by the creators of CIFAR100.  The other two mappings were developed by the authors.
\begin{table}[h!]
\caption{Map from Class to Superclass} 
\label{cl-scl-table}
\begin{center}
\begin{tabular}{ll}
\textbf{Class}  &\textbf{Superclass} \\
\hline \\
beaver, dolphin, otter, seal, whale & aquatic mammals \\
aquarium fish, flatfish, ray, shark, trout & fish \\
orchids, poppies, roses, sunflowers, tulips & flowers \\
bottles, bowls, cans, cups, plates & food containers \\
apples, mushrooms, oranges, pears, sweet peppers & fruit and vegetables \\
clock, computer keyboard, lamp, telephone, television & household electrical devices \\
bed, chair, couch, table, wardrobe & household furniture \\
bee, beetle, butterfly, caterpillar, cockroach & insects \\
bear, leopard, lion, tiger, wolf & large carnivores \\
bridge, castle, house, road, skyscraper & large man-made outdoor thing \\
cloud, forest, mountain, plain, sea & large natural outdoor scenes \\
camel, cattle, chimpanzee, elephant, kangaroo & large omnivores and herbivore \\
fox, porcupine, possum, raccoon, skunk & medium-sized mammals \\
crab, lobster, snail, spider, worm & non-insect invertebrates \\
baby, boy, girl, man, woman & people \\
crocodile, dinosaur, lizard, snake, turtle & reptiles \\
hamster, mouse, rabbit, shrew, squirrel & small mammals \\
maple, oak, palm, pine, willow & trees \\
bicycle, bus, motorcycle, pickup truck, train & vehicles 1 \\
lawn-mower, rocket, streetcar, tank, tractor & vehicles 2 \\
\end{tabular}
\end{center}
\end{table}

\begin{table}[t]
\caption{Map from Superclass to Category } 
\label{sc-cat-table}
\begin{center}
\begin{tabular}{ll}
\textbf{Superclass}  &\textbf{Category} \\
\hline \\
aquatic mammals & animals-water\\
fish & animals-water\\
flowers & plants\\
food containers & objects-other\\
fruit and vegetables & plants\\
household electrical devices & objects-other\\
household furniture & objects-other\\
insects & animals-other\\
large carnivores & animals-mammals \\
large man-made outdoor thing & landscape \\
large natural outdoor scenes  & landscape \\
large omnivores and herbivore &animals-mammals \\
medium-sized mammals & animals-mammals\\
non-insect invertebrates & animals-other \\
people & animals-human\\
reptiles & animals-other\\
small mammals & animals-mammals\\
trees & plants\\
vehicles 1 & objects-vehicle\\
vehicles 2 & objects-vehicle\\
\end{tabular}
\end{center}
\end{table}

\begin{table}[t]
\caption{Map from Category to Supercategory } 
\label{cat-scat-table}
\begin{center}
\begin{tabular}{ll}
\textbf{Category}  &\textbf{Supercategory} \\
\hline \\
animals-water & animals\\
animals-mammals & animals\\
animals-human & animals\\
animals-other & animals\\
plants & plants\\
objects-vehicle & objects\\
objects-other & objects\\
landscape & landscape \\
\end{tabular}
\end{center}
\end{table}
%%%%%%%%%%%%%%%%%%%%%%%%%%%%%%%%%%%%%%%%%%%%%%%%%%%%%%%%%%%%%%%%%%%%%%%%%%%%%%%
%%%%%%%%%%%%%%%%%%%%%%%%%%%%%%%%%%%%%%%%%%%%%%%%%%%%%%%%%%%%%%%%%%%%%%%%%%%%%%%

\end{document}